\documentclass[letterpaper]{article}
\usepackage[preprint]{aaai2027}
\usepackage[hyphens]{url}
\usepackage{graphicx}
\DeclareUrlCommand\code{\urlstyle{tt}}
\usepackage{natbib}
\usepackage{caption}
\usepackage{amsmath,amssymb,amsthm}
\usepackage{booktabs}

\newtheorem{definition}{Definition}
\newtheorem{proposition}{Proposition}

\newcommand{\imacs}{\textsc{IMACS}}
\newcommand{\eg}{e.g.,\ }
\newcommand{\ie}{i.e.,\ }

\title{Toward an Organizational Science of Multi-Agent LLM Systems: \\
Decoupling Who, How, and Which Algorithm}

\author{
Huan Chen\textsuperscript{\rm 1},
Xiang Song\textsuperscript{\rm 1},
Jian Jin\textsuperscript{\rm 1},
Pan Ren\textsuperscript{\rm 1},
Liang-Jie Zhang\textsuperscript{\rm 2}
}
\affiliations{
\textsuperscript{\rm 1}Shunfeng Technology Co., Ltd., China\\
\textsuperscript{\rm 2}Shenzhen University, China
}

\begin{document}
\maketitle

\begin{abstract}
Multi-agent frameworks built on large language models (LLMs) routinely entangle three
logically distinct concerns: \emph{who} is on the team (organization), \emph{how} members
align (coordination), and \emph{which} algorithm fuses their work (collaboration protocol).
\imacs{} (Intelligent Multi-Agent Collaboration System) separates the three into orthogonal,
independently swappable layers. Classic organizational theory (Belbin roles, Mintzberg
coordination, RACI accountability) becomes executable, validated configuration, and the
framework places six published collaboration algorithms behind a common interface while
exposing roles, coordination, and accountability as independently configurable factors. We
use this separation to conduct controlled comparisons in which organizational assignments
vary while the collaboration protocol is held fixed. It also turns protocol choice into a
variable that can be \emph{learned}: Adaptive Org Routing, a contextual-bandit meta-protocol,
selects a protocol per task under an explicit quality--cost trade-off, outperforms every
fixed protocol in a controlled study, and trains online on real benchmark and LLM-judge
rewards. The ablations expose a mechanism. Accountability placement changes outcomes exactly
when the protocol routes the deliverable through the accountable agent, and the winning
placement flips across model families, so organizational design cannot be hard-coded; it must
be re-validated, or learned, for each model binding.
\end{abstract}

\section{Introduction}

Large language models (LLMs) have moved from single-shot prompting to \emph{multi-agent}
collaboration, in which multiple model instances critique and build on one another's
reasoning. A growing
toolbox of collaboration algorithms now
exists---Mixture-of-Agents~\citep{wang2024moa}, Multi-Agent Debate~\citep{du2023debate},
Self-Consistency~\citep{wang2023selfconsistency}, LLM-Blender~\citep{jiang2023llmblender},
Plan-and-Execute~\citep{hong2024metagpt}, Reflexion~\citep{shinn2023reflexion}---alongside agent
frameworks such as AutoGen, CAMEL, and MetaGPT.

\paragraph{The entanglement problem.}
Despite this progress, most frameworks \emph{tangle} three logically independent concerns
into one implementation:
\begin{enumerate}
  \item \textbf{Organization (WHO):} which agents exist, what roles they play, which models
  back them, and who is accountable.
  \item \textbf{Coordination (HOW):} the mechanism by which members align, whether informal
  mutual adjustment, a supervisor issuing orders, or standardized procedures and outputs.
  \item \textbf{Collaboration protocol (the ALGORITHM):} the procedure that fuses individual
  contributions into a result, such as debate, voting, or planning.
\end{enumerate}
When these are fused, swapping the aggregation algorithm means rewriting the team. The cost is
threefold: teams cannot be reconfigured without touching
algorithms; decades of organizational theory~\citep{belbin2010,mintzberg1979} cannot be applied as
a first-class design surface; and collaboration algorithms cannot be compared on an equal
organizational footing, since each ships its own implicit team.

\paragraph{Our thesis: decouple the three concerns.}
We argue these concerns are \emph{orthogonal} and model them as independently swappable layers.
\imacs{} (Intelligent Multi-Agent Collaboration System) realizes this: organizations are declared
in YAML and grounded in Belbin's nine roles~\citep{belbin2010}; coordination is one of Mintzberg's
five mechanisms~\citep{mintzberg1979}; and the collaboration protocol is a pluggable object behind
a single interface. RACI is a cross-cutting tag with a validated at-most-one-Accountable
constraint, and any layer can be replaced without touching the others
(Proposition~\ref{prop:ortho}). To show the organization layer is culturally pluggable, \imacs{}
ships three presets: \texttt{belbin} (default), \texttt{adhocracy}, and
\texttt{three-departments} (the Tang-dynasty ministry system in Belbin terms).

\paragraph{Why decouple?}
Once the three factors are independently swappable, organizational variables (roles,
coordination, RACI) can be held fixed or ablated while the protocol stays constant; to our
knowledge, no prior multi-agent LLM framework supports such controlled ablation of \emph{who}
and \emph{how} independent of \emph{which algorithm}. Protocol choice also becomes a free
variable. We exploit both consequences in this paper.

\paragraph{Adaptive Org Routing.}
With protocol choice freed, we cast it as an online decision problem. \textbf{Adaptive Org
Routing} featurizes each task into an interpretable 7-dimensional vector and uses a LinUCB
contextual bandit~\citep{li2010linucb} to select, per task, the protocol that maximizes a
cost-adjusted reward $r=\text{quality}-\lambda\cdot\text{cost}$, updating online via rank-1
Sherman--Morrison updates. Over a task stream it traces a quality/cost Pareto frontier,
spending on expensive protocols only when they earn their cost.

\paragraph{Contributions.}
\begin{itemize}
  \item \textbf{A decoupled architecture and formalization (our core contribution).} We separate
  organization, coordination, and collaboration into orthogonal layers, formalize an organization
  as $O=(\mathcal{R},\mu,\kappa,\rho)$ and a protocol as
  $\pi:(\text{Task},\text{Team},\text{Blackboard})\!\to\!\text{Result}$, and prove an orthogonality
  property on which the controlled ablations and the router below both rest.
  \item \textbf{A systematic study of the organization dimension.} Holding the protocol fixed
  and varying the organization, controlled RACI ablations replicated three times show
  accountability matters
  \emph{exactly when} the protocol routes the deliverable through the accountable agent, and
  only by changing which model does the work; a natural cross-dataset conjecture proves
  \emph{unstable across repeats and model bindings}. We also stress-test a catalog of
  evidence-backed (organization, protocol) recipes on five real benchmarks.
  \item \textbf{Adaptive Org Routing}, an online organization-aware meta-protocol that selects
  a protocol per task via a contextual bandit over task features, evaluated on a controlled
  oracle ($82\%$ routing accuracy, beating all six fixed baselines) and trained on real
  objective and LLM-judge rewards.
  \item \textbf{A modular, reproducible framework}, released with this paper: a unified
  pluggable implementation of the six protocols above on AgentScope~2.0, with gateway key
  isolation, a transparent blackboard runtime, and seeded benchmarks.
\end{itemize}

A note on scope: the framework is the primary contribution and the router a method instance
built on it. The router's headline number comes from a controlled simulated oracle; the
real-benchmark, real-reward, and RACI results are small-$N$, reported as direction and
mechanism (see Discussion).

\section{Related Work}

\paragraph{Multi-agent LLM frameworks.}
Prior frameworks couple roles, coordination, and aggregation~\citep{guo2024survey}:
conversational systems like AutoGen, CAMEL, and AgentVerse fix the dialogue loop as the
protocol~\citep{wu2023autogen,li2023camel,chen2024agentverse}, while MetaGPT and ChatDev bind
roles to a fixed
software-company pipeline~\citep{hong2024metagpt,qian2024chatdev}. A complementary line
optimizes the communication topology itself---GPTSwarm learns over agent
graphs~\citep{zhuge2024gptswarm}, DyLAN prunes dynamic agent networks~\citep{liu2024dylan}---%
but \emph{within} one algorithmic family, without declarative roles or accountability.
Purpose-built evaluations have followed: MultiAgentBench scores collaboration across
coordination topologies with milestone KPIs~\citep{zhu2025multiagentbench}; AgentsNet poses
formally verifiable coordination problems over agent
networks~\citep{groetschla2025agentsnet}. \imacs{}
instead treats organization, coordination, and collaboration as orthogonal layers over a
generic runtime (we build on AgentScope~2.0~\citep{gao2024agentscope}), so a team and an
algorithm combine freely.

\paragraph{Collaboration algorithms.}
We treat published collaboration algorithms as interchangeable \emph{protocols}:
Mixture-of-Agents~\citep{wang2024moa} (proposer/aggregator layers), Multi-Agent
Debate~\citep{du2023debate} (critique-revise rounds), Self-Consistency~\citep{wang2023selfconsistency}
(sample-and-vote), LLM-Blender~\citep{jiang2023llmblender} (rank-and-fuse), Plan-and-Execute
in the Plan-and-Solve / MetaGPT line~\citep{wang2023planandsolve,hong2024metagpt}
(decompose-then-execute), and
Reflexion~\citep{shinn2023reflexion} (verbal self-feedback). \imacs{} ships each behind a common
interface, so they can be compared on the same team and selected automatically per task.

\paragraph{Organizational theory.}
Belbin~\citep{belbin2010} identifies nine team roles (thought-, people-, action-oriented) a
balanced team covers; Mintzberg~\citep{mintzberg1979} identifies five coordination mechanisms and
forms such as the adhocracy; RACI assigns Responsible/Accountable/Consulted/Informed tags,
classically one Accountable per deliverable. While agent ``personas'' are common, \imacs{} makes
these theories a \emph{first-class, declarative, validated} design surface, with cultural
pluggability via the \emph{three-departments} preset (the Tang ministry system;
\citealt{fei2026institutions} also cast historical institutions as agent topologies, but as
fixed architectures, not a swappable layer).

\paragraph{Bandits for model and protocol selection.}
LinUCB~\citep{li2010linucb} is the canonical linear-payoff contextual
bandit~\citep{auer2002,chu2011contextual} with
$\tilde{O}(\sqrt{T})$ regret. Adjacent selection problems use the same machinery, routing
\emph{queries to models}~\citep{chen2024frugalgpt,shnitzer2024routing,ong2025routellm,poon2026multillm} or
across agent \emph{communication} standards (A2A, ACP,
ANP)~\citep{du2026protocolbench}. \imacs{} routes at a different layer, to our knowledge unaddressed:
its arms are entire \emph{collaboration algorithms} over a declared organization, its context
an interpretable 7-dimensional task featurization, and the problem is well-posed only because
organization and protocol are decoupled (Proposition~\ref{prop:ortho}).

\section{The \imacs{} Framework}
\label{sec:framework}

\begin{figure}[t]
\centering
\includegraphics[width=0.95\columnwidth]{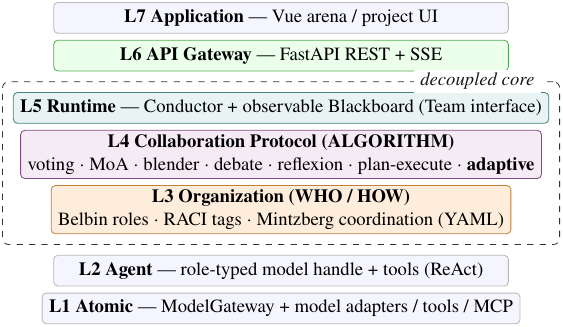}
\caption{The seven-layer \imacs{} stack. Inside the dashed region, the organization
(WHO/HOW) and the collaboration protocol (ALGORITHM) meet only through the runtime's Team and
Blackboard interfaces, so any organization composes with any protocol
(Proposition~\ref{prop:ortho}). The adaptive router is itself an L4 protocol that selects
among the others.}
\label{fig:arch}
\end{figure}

\imacs{} is organized as a stack of seven layers (Figure~\ref{fig:arch}). Lower layers provide
atomic capabilities; upper layers compose them. The three middle layers (organization,
collaboration, and the runtime that hosts them) are where the decoupling lives.

\subsection{Layers}
\textbf{L1 Atomic capabilities.} Model adapters, tools, MCP servers, and the provider-facing
\emph{ModelGateway}. The ModelGateway owns provider credentials and resolves opaque model
handles, keeping credentials below the agent boundary. Models are restricted to an
author-defined experimental registry: callable through available provider APIs, compatible with
the common AgentScope chat interface, and spanning distinct provider families. It is an
implementation allowlist, not an external certification or quality endorsement. An offline
deterministic stand-in keeps tests and demos running without credentials or network access.

\textbf{L2 Agent.} A role-typed wrapper binding a model handle, a system prompt, and tools;
agents act via a ReAct-style reason-act loop~\citep{yao2023react} when tools are involved.

\textbf{L3 Organization (WHO/HOW).} A declarative team: Belbin-typed roles, each bound to a
registered model and a RACI tag, plus a Mintzberg coordination mechanism. Loaded from YAML
presets and formalized below (Definition~\ref{def:org}).

\textbf{L4 Collaboration protocol (ALGORITHM).} The pluggable fusion algorithm behind the
single interface of Definition~\ref{def:protocol}. Ships voting, MoA, blender, debate,
reflexion, plan-execute, and the \emph{adaptive} meta-protocol (Adaptive Org Routing).

\textbf{L5 Runtime.} A \emph{Conductor} instantiates a Team from an organization and drives a
protocol over a shared \emph{Blackboard} on which every intermediate message is posted, a
transparent room with human-in-the-loop and SSE hooks.

\textbf{L6 API Gateway.} A FastAPI boundary exposes the L5 Conductor through REST and streams
Blackboard events through SSE. It delegates model resolution to L1's ModelGateway and does not
expose provider credentials to applications.

\textbf{L7 Application.} A Vue application provides two canonical demos: an \emph{arena} (models
compete per query) and a \emph{project} (a task decomposed across roles).

\subsection{Formalization}
\label{sec:formal}

\begin{definition}[Organization]
\label{def:org}
An organization is a tuple $O=(\mathcal{R},\mu,\kappa,\rho)$ where
\begin{itemize}
  \item $\mathcal{R}=\{r_1,\dots,r_n\}$ is a set of \emph{roles}, each $r_i\in\mathcal{B}$ drawn from
  Belbin's nine team roles $\mathcal{B}$;
  \item $\mu:\mathcal{R}\to\mathcal{M}$ is a \emph{model binding} from roles to the configured
  model registry $\mathcal{M}$ (the gateway guarantees $\mathrm{im}(\mu)\subseteq\mathcal{M}$);
  \item $\kappa\in\mathcal{K}$ is a single \emph{coordination mechanism} from Mintzberg's five
  mechanisms $\mathcal{K}$;
  \item $\rho:\mathcal{R}\to\{\mathsf{R},\mathsf{A},\mathsf{C},\mathsf{I}\}$ is the RACI
  \emph{assignment}, subject to the single-owner constraint
  $\bigl|\{r\in\mathcal{R}: \rho(r)=\mathsf{A}\}\bigr|\le 1$.
\end{itemize}
\end{definition}

Here $\mathsf{R}$, $\mathsf{A}$, $\mathsf{C}$, and $\mathsf{I}$ denote Responsible,
Accountable, Consulted, and Informed; Greek symbols name organization components rather than
reusing these RACI initials. The single-owner constraint is validated at load time. A \emph{Team} $T$ is the instantiation
of $O$: a set of agents
$\{a_i\}$ where $a_i$ wraps model $\mu(r_i)$ under role $r_i$, sharing a coordination context
derived from $\kappa$.

\begin{definition}[Collaboration protocol]
\label{def:protocol}
A collaboration protocol is a function
\[
  \pi:\ (\mathit{Task},\ \mathit{Team},\ \mathit{Blackboard})\ \longrightarrow\ \mathit{Result},
\]
that, given a task, an instantiated team, and a shared blackboard, executes a fusion algorithm
and returns a result, while posting every intermediate utterance to the blackboard. The set of
protocols $\Pi$ is extensible: a new algorithm is a new $\pi\in\Pi$ registered by name.
\end{definition}

A protocol is deliberately \emph{agnostic} to which roles populate the team and to $\kappa$; it
sees a pool of role-typed agents, never a fixed cast. Conversely, an organization carries no
commitment to any $\pi$.

\begin{proposition}[Orthogonality of the three layers]
\label{prop:ortho}
Let $\mathcal{O}$ be the set of valid organizations and $\Pi$ the protocols. The system's behavior is a well-defined map
$\mathrm{run}:\mathcal{O}\times\Pi\times\mathit{Task}\to\mathit{Result}$ on the full product
$\mathcal{O}\times\Pi$: any organization $O\in\mathcal{O}$ composes with any protocol
$\pi\in\Pi$. Hence replacing one component leaves the interfaces of the others invariant:
for fixed $\pi$, varying $O$ (roles, models, coordination $\kappa$, or RACI assignment $\rho$) requires
no change to $\pi$; and for fixed $O$, varying $\pi$ requires no change to $O$.
\end{proposition}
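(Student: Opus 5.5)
The plan is to define $\mathrm{run}$ explicitly as a composition of two maps and check that each is total on its domain. First, define instantiation $\mathrm{inst}:\mathcal{O}\to\mathit{Team}$ by $O=(\mathcal{R},\mu,\kappa,\rho)\mapsto T=\{a_i\}$, where $a_i$ wraps $\mu(r_i)$ under role $r_i$ with a coordination context derived from $\kappa$, as in the paragraph after Definition~\ref{def:org}. Then set
\[
\mathrm{run}(O,\pi,t)\;:=\;\pi\bigl(t,\ \mathrm{inst}(O),\ B_0\bigr),
\]
with $B_0$ a fresh empty blackboard supplied by the runtime. Well-definedness on all of $\mathcal{O}\times\Pi\times\mathit{Task}$ then reduces to two checks: $\mathrm{inst}$ is defined on every valid $O$, and every $\pi\in\Pi$ accepts every value in $\mathit{Team}$.

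For the first check I would use validity. Since $\mathrm{im}(\mu)\subseteq\mathcal{M}$ is guaranteed by the gateway, every role resolves to a registered model, so each agent $a_i$ exists. The single-owner constraint on $\rho$ is enforced at load time, and $\kappa\in\mathcal{K}$ is a single mechanism, so the coordination context is determined. Hence $\mathrm{inst}(O)$ is a well-formed Team for each $O\in\mathcal{O}$, and it depends only on $O$, not on $\pi$.

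The second check is the crux. Definition~\ref{def:protocol} types $\pi$ on the whole of $\mathit{Team}$, and the remark following it states that a protocol is agnostic to which roles populate the team and to $\kappa$: it sees only a pool of role-typed agents. So $\pi$ is a total function on $\mathit{Team}$, and in particular on $\mathrm{inst}(\mathcal{O})\subseteq\mathit{Team}$, and the composite is total.

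Invariance of interfaces follows from this factorization. $O$ enters $\mathrm{run}$ only through $\mathrm{inst}(O)$, and $\pi$ enters only as the applied function, so the two meet solely at the types $\mathit{Team}$ and $\mathit{Blackboard}$. Varying any of $\mathcal{R},\mu,\kappa,\rho$ changes the argument passed to $\pi$ but not $\pi$'s signature or definition. Conversely, the definition of $\mathrm{inst}$ does not mention $\pi$, because an organization carries no commitment to any protocol, so swapping $\pi$ leaves $O$ unchanged.

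The main obstacle is the honesty of the totality claim in the second check. A protocol might implicitly require, say, at least two agents, or a designated aggregator. I would handle this in one of two ways: restrict $\mathcal{O}$ to nonempty teams, or require each registered $\pi$ to fall back to role-agnostic defaults (for example, choosing the $\mathsf{A}$-tagged agent if one exists, else an arbitrary one). Either way the proposition holds by typing discipline on the registry.
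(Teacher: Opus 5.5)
Your proposal is correct and follows the paper's own argument. The paper's sketch likewise says that $\pi$ touches $O$ only through the Team and Blackboard interfaces, while $O$ is a static declaration consumed by the runtime that never references $\pi$; your factorization $\mathrm{run}(O,\pi,t)=\pi(t,\mathrm{inst}(O),B_0)$ simply makes this explicit, and your version is slightly more careful, since it needs only that varying $O$ changes the argument passed to $\pi$ and not its signature (the paper instead asserts that the Team interface hides roles and $\rho$, even though its own Blender and MoA protocols read the Accountable tag) and your fallback caveat matches the paper's ``first agent when none is tagged'' convention.
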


\begin{proof}[Proof sketch]
$\pi$ depends on $O$ only through the \emph{Team} interface (a pool of role-typed agents) and the
\emph{Blackboard} interface, neither of which exposes specific roles, model identities,
$\kappa$, or $\rho$. $O$ depends on $\pi$ through no interface at all; it is a static declaration
consumed by the runtime, not by $\pi$. Therefore the two are free coordinates of the design
space, and $\mathrm{run}$ is total on $\mathcal{O}\times\Pi$. The coordination mechanism $\kappa$ is
likewise a property of $O$ surfaced to agents as context, independent of $\pi$. $\square$
\end{proof}

Proposition~\ref{prop:ortho} lets the three presets (\texttt{belbin}, \texttt{adhocracy},
\texttt{three-departments}) and the protocol library be reused combinatorially, and it makes
the routing problem of Adaptive Org Routing well-posed: the router chooses a point in $\Pi$
while $O$ is held fixed.

\section{Adaptive Org Routing}
\label{sec:routing}

Decoupling protocols from organizations (Proposition~\ref{prop:ortho}) makes protocol choice a
free variable. The right choice is task-dependent: cheap voting suffices for factual lookups,
debate helps reasoning, and plan-and-execute suits decomposable work, but each step up costs
more model calls. \emph{Adaptive Org Routing} learns this mapping online.

\subsection{Task featurization}
Routing must be cheap relative to the protocols it selects, so featurization uses no model call.
A goal string is mapped to an interpretable vector $x\in\mathbb{R}^{7}$ whose coordinates are an
intercept $1$ and six features---\emph{length}, \emph{\#questions}, \emph{\#subtasks},
\emph{reasoning}, \emph{factoid}, \emph{has\_code}---each bounded to $[0,1]$ by normalized
counts (token length, \texttt{?} count, decomposition/reasoning/factoid keyword densities, and a
code indicator). These features are the \emph{context} that lets the policy generalize across
tasks of the same type.

\subsection{Bandit formulation}
We cast per-task protocol selection as a contextual bandit. The arms are the six base protocols
$\mathcal{A}=\{\text{voting},\text{moa},\text{blender},\text{debate},\text{reflexion},\text{plan-execute}\}$.
On task $t$ with feature vector $x_t$, the router plays arm $\pi_t\in\mathcal{A}$, runs that
protocol over the (fixed) organization, observes a quality estimate $q_t\in[0,1]$ from an
evaluator agent, and forms a \emph{cost-adjusted reward}
\begin{equation}
  r_t \;=\; q_t \;-\; \lambda\cdot \tilde{c}(\pi_t),
  \qquad
  \tilde{c}(\pi)=\frac{c(\pi)}{\max_{\pi'}c(\pi')},
  \label{eq:reward}
\end{equation}
where $c(\pi)$ is the protocol's relative model-call multiplicity (voting $1.0$, MoA/blender
$1.5$, reflexion $2.0$, plan-execute $2.5$, debate $3.0$) and $\lambda\ge 0$ tunes cost aversion.

We use LinUCB~\citep{li2010linucb}. Each arm $a$ maintains $A_a\in\mathbb{R}^{d\times d}$
(initialized to $I_d$, $d{=}7$) and $b_a\in\mathbb{R}^{d}$. Its payoff estimate is
$\hat\theta_a=A_a^{-1}b_a$ and the router selects
\begin{equation}
  \pi_t \;=\; \arg\max_{a\in\mathcal{A}}\ \Bigl(\hat\theta_a^{\top}x_t
  \;+\;\alpha\sqrt{x_t^{\top}A_a^{-1}x_t}\,\Bigr),
  \label{eq:ucb}
\end{equation}
the upper-confidence bound on cost-adjusted utility, with exploration coefficient $\alpha$.
After observing $r_t$ it updates $A_{\pi_t}\!\mathrel{+}=x_tx_t^{\top}$,
$b_{\pi_t}\!\mathrel{+}=r_t x_t$. We maintain $A_a^{-1}$ directly via the
Sherman--Morrison rank-1 update,
\begin{equation}
  A^{-1} \leftarrow A^{-1} - \frac{(A^{-1}x)(A^{-1}x)^{\top}}{1+x^{\top}A^{-1}x},
  \label{eq:sm}
\end{equation}
giving $O(d^2)$ per-step learning with no matrix inversion. The full per-task meta-protocol is
given as pseudocode in the supplement.

\subsection{Persistence and online observability}
The router's entire learned state is the per-arm pair $(A_a^{-1}, b_a)$ plus a pull
count, a few small dense matrices ($d{=}7$) that serialize compactly to JSON, so a policy can
be checkpointed and restored to resume learning rather than cold-start; restoration validates
the arm set and feature dimension. Because learning is fully online, rolling routing accuracy and
cumulative regret stream per round (Figure~\ref{fig:curves}): train once, persist, keep
improving.

\subsection{Regret and ``org-awareness''}
Equation~\eqref{eq:ucb} is LinUCB on a $d$-dimensional context with bounded rewards, which under
the standard linear-payoff assumption enjoys $\tilde{O}(\sqrt{dT})$ cumulative
regret~\citep{li2010linucb}; our setting satisfies the assumptions by construction (full
statement and sketch in the supplement), so per-round regret against the cost-adjusted optimum
vanishes as $\tilde{O}(\sqrt{d/T})$. The router holds $O$ fixed and chooses over $\Pi$, but by
Proposition~\ref{prop:ortho} the arm set extends to (protocol, coordination) pairs or per-role
model bindings with the same machinery (only $|\mathcal{A}|$ grows); we evaluate the
protocol-selection instance and leave the joint space to future work.

\section{Scenario Recommendations}
\label{sec:recipes}

Proposition~\ref{prop:ortho} guarantees that any organization composes with any protocol,
but not \emph{which} combination to pick. This section closes that gap with a small catalog
of evidence-backed (organization, protocol) recipes (Table~\ref{tab:recipes}), each naming a
public benchmark so the recommendation is falsifiable, shipped as a one-call API
and an auto-recommender.

\subsection{From evidence to recipes}
The recipes follow established results: self-consistency voting for
verifiable arithmetic~\citep{wang2023selfconsistency}; multi-agent debate for open-ended
reasoning and factuality~\citep{du2023debate}; Mixture-of-Agents~\citep{wang2024moa} and
ranking-fusion~\citep{jiang2023llmblender} for open-ended generation; Reflexion for code with
executable feedback~\citep{shinn2023reflexion}; and role-specialized planning for complex,
decomposable tasks~\citep{hong2024metagpt}.

\begin{table}[t]
\centering
\small
\begin{tabular}{@{}lll@{}}
\toprule
\textbf{Scenario} & \textbf{Org} & \textbf{Protocol} \\
\midrule
Factual lookup        & belbin            & voting        \\
Math reasoning        & belbin            & voting        \\
Open reasoning        & belbin            & debate        \\
Code generation       & belbin            & reflexion     \\
Open-ended gen.       & adhocracy         & moa           \\
Candidate fusion      & belbin            & blender       \\
Complex decomp.       & three-departments & plan-execute  \\
\bottomrule
\end{tabular}
\caption{Evidence-backed scenario recipes. Each pairing is the default \imacs{}
recommendation for its scenario, tested against a public benchmark; evidence cited in the text.}
\label{tab:recipes}
\end{table}

Collaboration is not universally beneficial: heavy protocols can over-think simple tasks and
coordination failures can reduce quality~\citep{cemri2025mast}. Recipes therefore encode
structural role division rather than persona strings, which do not reliably improve objective
accuracy~\citep{zheng2023persona}, and serve as priors for the adaptive router.

\subsection{From asserted to validated recommendations}
\label{sec:recipes:realeval}
We test these priors on five public benchmarks: GSM8K~\citep{cobbe2021gsm8k}
exact-match, HumanEval~\citep{chen2021humaneval} pass@1, HotpotQA token-F1, and
AgentsNet~\citep{groetschla2025agentsnet} formal validity use task-intrinsic, model-free
scoring; open-ended AlpacaEval uses a disclosed LLM-judge win-rate. Runs also log latency and
answer length, isolate item-level transport failures, and record data source and generation
mode. Every table cell is generated from its evaluation record, so Table~\ref{tab:realeval}
is a directly auditable quality/cost frontier rather than a manually transcribed summary.

\begin{table*}[t]
\centering
\small
\setlength{\tabcolsep}{4pt}
\begin{tabular}{@{}lccccc@{}}
\toprule
 & \textbf{GSM8K} & \textbf{HumanEval} & \textbf{HotpotQA} & \textbf{AlpacaEval} & \textbf{AgentsNet} \\
\textbf{Protocol} & exact-match & pass@1 & token-F1 & win-rate & validity \\
\midrule
Voting          & 0.60 & \textbf{1.00} & 0.33 & 0.90 & \textbf{1.00} \\
Debate          & \textbf{0.90} & \textbf{1.00} & 0.36 & 1.00 & \textbf{1.00} \\
Reflexion       & 0.50 & \textbf{1.00} & 0.21 & -- & \textbf{1.00} \\
Plan-Execute    & \textbf{0.90} & \textbf{1.00} & \textbf{0.54} & -- & \textbf{1.00} \\
MoA             & -- & -- & -- & 1.00 & -- \\
Blender         & -- & -- & -- & 1.00 & -- \\
\midrule
Best solo & 0.90 & 1.00 & 0.53 & 1.00 & 1.00 \\
\bottomrule
\end{tabular}
\caption{Real-benchmark evaluation, $N{=}10$ items/cell, real models (belbin org). Each
column is scored by its objective metric (exact-match / pass@1 / token-F1 / formal
validity) or, for the inherently open-ended AlpacaEval, an LLM-judge win-rate vs.\ a
reference baseline (judge = GLM-5.2; a model judge carries length/style biases). Best
protocol per objective column in bold. At $N{=}10$, reported AlpacaEval scores are near the
ceiling and AgentsNet scores equal $1.00$, so we read cost there. ``--'': not run. Bottom row: the strongest \emph{single} model per dataset
answering alone (mean over the three solo runs in the organization-dimension study;
single run for AlpacaEval and AgentsNet).}
\label{tab:realeval}
\end{table*}

\paragraph{Findings.}
At $N{=}10$/cell, HotpotQA supports Plan-and-Execute ($0.54$, also lowest latency), while
GSM8K favors Plan-and-Execute and Debate ($0.90$) over the Voting recipe ($0.60$).
At $N{=}10$, all reported HumanEval and AgentsNet cells equal $1.00$, leaving cost as the
useful axis: Plan-and-Execute is respectively $4.2\times$ and $2.3\times$ faster than Debate.
Reported AlpacaEval scores are near the ceiling ($0.90$--$1.00$); MoA reaches the top observed
win-rate at the lowest latency ($127$ s versus $154$ s
for Blender and $230$ s for Debate). No protocol exceeds the strongest solo model on quality,
and that model changes by dataset. Thus task type and model binding jointly determine the
winner: recipes are falsifiable priors, not guarantees, motivating adaptive selection.
These small-$N$ runs are directional rather than leaderboard claims. Objective metrics are
used wherever possible; the open-ended judge, its biases, and the synthetic/real boundary are
explicitly disclosed. AgentsNet is evaluated as a whole-graph deliverable because its checker
is intrinsic to the output; framework-coupled interaction metrics are excluded (supplement).

\subsection{The organization dimension}
\label{sec:orggrid}
The findings above vary the protocol while holding the organization fixed. We now cross the
two. Table~\ref{tab:orggrid}(a) evaluates a partial $3\times2$ organization--protocol grid
on GSM8K: no organization is uniformly best, the protocol preference varies with the
organization (\emph{belbin} and \emph{three-departments} favor debate, $0.60/0.80$ and
$0.50/0.90$, while \emph{adhocracy} is even at $0.70/0.70$), and a fixed protocol
moves by up to $0.20$ exact-match across organizations. This is direct evidence for
Proposition~\ref{prop:ortho} in practice: the organization is a real, independent lever whose best choice
interacts with the protocol.

\paragraph{Does accountability placement matter? A controlled RACI ablation.}
Because the three presets differ on many axes at once, a raw grid cannot isolate \emph{RACI}, so
we add a controlled ablation (Table~\ref{tab:orggrid}(b)): three variants of the \emph{same}
belbin team (identical roles, models, and coordination) differing only in the Accountable tag
(Chair, Critic, or none). Voting and debate never read the tag, so for them the three variants
are functionally identical, which lets this grid double as a noise calibration. Over
three independent repeats, the variant \emph{means} agree within $0.04$ (voting
$0.57$--$0.60$; debate $0.73$--$0.77$) while the \emph{single-run} cross-variant spread
reaches $0.17\pm0.05$: single-run differences of that size are generation noise, the yardstick
for the Blender effect below. With three repeats (all $\pm$ are population std over repeats), we
report sign consistency and paired effect sizes rather than significance tests.
We report this near-null result as is: \emph{where} accountability sits barely moves
objective accuracy, consistent with evidence that role/persona tags alone do not lift objective
performance~\citep{zheng2023persona} and with the framing above that a recipe's value is
structural.

\paragraph{When RACI \emph{does} bite: a mechanistic prediction, tested.}
The null on voting/debate does not mean accountability is inert; those protocols simply never
read the Accountable tag. Reading the control flow predicts when RACI matters: LLM-Blender hands the
final fusion to the team's accountable agent (first agent when none is tagged), so the
\emph{fuser is the accountable agent}; moving Accountable from the Chair (\texttt{glm-5.2})
to the Critic (\texttt{deepseek}) reroutes the final write to a different model. MoA instead
aggregates at its coordinator role and consults the Accountable tag only as a fallback,
so its aggregator stays the Chair regardless of RACI. This yields a
falsifiable prediction, \emph{RACI should move Blender but not MoA}, and a natural follow-up
conjecture, that Blender's winning variant should use the stronger fuser model. We test both
across three independent repeats with a solo-model baseline
(Table~\ref{tab:orggrid}(c) and Table~\ref{tab:raci:crossds}). The prediction holds in the
repeated data: on HotpotQA, Blender's paired Critic$-$Chair gap is positive in \emph{every}
repeat ($+0.08\pm0.02$ in the fuser grid; $0.34\pm0.06$ vs.\ $0.48\pm0.05$ cross-dataset),
while the MoA control, whose aggregator is identical in all three variants, shows an unsigned
spread of comparable magnitude ($0.13$ per repeat) whose \emph{ordering shuffles} across
repeats. The effect is stable in sign; the control varies without one. GSM8K moves the same
way when it moves ($0.70\pm0.00$ vs.\ $0.77\pm0.09$) and all HumanEval cells equal $1.0$ in this sample. The follow-up
conjecture, by contrast, has \emph{no stable direction}: across the repeats the winning fuser
matched the stronger \emph{solo} model in $1$, $3$, and $1$ of 3 datasets, and shifts again under
a different model binding (supplement). The claim that survives is
precise: \textbf{accountability placement affects outcomes exactly when the protocol routes
the deliverable through the accountable agent (a replicated, sign-consistent effect), it
operates by changing which model does the work, and its direction is not predictable a
priori}. The optimal placement must therefore be found empirically, an argument for an
adaptive router over (organization, protocol) rather than a static rule.

\begin{table}[t]
\centering
\small
\begin{tabular}{@{}lcc@{}}
\toprule
\textbf{Organization} & \textbf{Voting} & \textbf{Debate} \\
\midrule
\multicolumn{3}{@{}l}{\emph{(a) org $\times$ protocol --- presets}} \\
belbin              & 0.60 & \textbf{0.80} \\
adhocracy           & 0.70 & 0.70 \\
three-departments   & 0.50 & \textbf{0.90} \\
\midrule
\multicolumn{3}{@{}l}{\emph{(b) RACI ablation (3 repeats)}} \\
Chair   & 0.60$\pm$0.00 & 0.77$\pm$0.05 \\
Critic           & 0.57$\pm$0.05 & 0.73$\pm$0.05 \\
none     & 0.60$\pm$0.16 & 0.73$\pm$0.12 \\
\midrule
\multicolumn{3}{@{}l}{\emph{(c) HotpotQA fuser ablation (3 repeats)}} \\
 & \textbf{MoA} & \textbf{Blender} \\
Chair   & 0.38$\pm$0.03 & 0.34$\pm$0.04 \\
Critic           & 0.32$\pm$0.07 & 0.42$\pm$0.02 \\
none     & 0.30$\pm$0.06 & 0.25$\pm$0.06 \\
\bottomrule
\end{tabular}
\caption{Organization dimension (exact-match / token-F1, $N{=}10$/cell, real models;
(b--c) mean$\pm$std over three repeats).
\emph{(a)} GSM8K: no organization is uniformly best and the protocol preference varies with
the organization --- a real, independent lever that interacts with the protocol.
\emph{(b)} Voting and debate never read the Accountable tag, so these variants are
functionally identical: means agree within $0.04$, single-run spread reaches $0.17\pm0.05$
(the measured $N{=}10$ noise floor).
\emph{(c)} Blender's fuser \emph{is} the accountable agent: its paired Critic$-$Chair gap is
positive in $3/3$ repeats ($+0.08\pm0.02$), while the MoA control (identical aggregator)
shuffles across repeats. The gap keeps its sign; the control does not.}
\label{tab:orggrid}
\end{table}

\begin{table}[t]
\centering
\small
\setlength{\tabcolsep}{2.5pt}
\begin{tabular}{@{}lcccc@{}}
\toprule
\textbf{Dataset} & \textbf{Blender} & \textbf{Blender} & \textbf{Solo} & \textbf{Solo} \\
 & \textbf{Chair} & \textbf{Critic} & \textbf{glm} & \textbf{deepseek} \\
 & glm fuser & deepseek fuser & & \\
\midrule
GSM8K & 0.70$\pm$0.00 & 0.77$\pm$0.09 & 0.70$\pm$0.00 & 0.90$\pm$0.08 \\
HumanEval & 1.00$\pm$0.00 & 1.00$\pm$0.00 & 1.00$\pm$0.00 & 1.00$\pm$0.00 \\
HotpotQA & 0.34$\pm$0.06 & 0.48$\pm$0.05 & 0.53$\pm$0.04 & 0.50$\pm$0.01 \\
\bottomrule
\end{tabular}
\caption{RACI $\times$ Blender cross-dataset attribution, mean$\pm$std over 3 independent repeats ($N{=}10$/cell, real models). Moving Accountable from Chair (\texttt{glm-5.2} fuser) to Critic (\texttt{deepseek} fuser) reroutes who writes the final answer. The winning fuser matched the stronger \emph{solo} model in 1, 3, 1 of 3 datasets per repeat: no stable direction, so the optimal placement must be found empirically.}
\label{tab:raci:crossds}
\end{table}

\section{Experiments}
\label{sec:experiments}

We study whether Adaptive Org Routing \emph{learns} to send each task to its cost-adjusted
optimal protocol, and whether the learned policy beats every fixed protocol. Because the goal is
to measure routing/regret behavior against \emph{known} ground truth, not to estimate absolute
LLM quality, we evaluate against a controlled simulated oracle (motivated in
the Discussion).

\subsection{Setup}
\textbf{Task stream.} A mixed suite of factual, reasoning, complex-decomposition, code, and
open-ended tasks (\eg ``What is the capital of France?''). Each round draws a task uniformly
at random; tasks carry a type tag used only to
define the oracle, never seen by the router.

\textbf{Simulated quality oracle.} Each (task-type, protocol) pair has a latent mean quality in
$[0,1]$ plus bounded Gaussian noise ($\sigma{=}0.03$). The latent profile encodes established
literature trends: factual tasks peak at \emph{voting}; reasoning peaks at
\emph{debate}~\citep{du2023debate}; code peaks at \emph{reflexion}~\citep{shinn2023reflexion};
complex/decomposable tasks peak at \emph{plan-execute}~\citep{hong2024metagpt}; open-ended
generation peaks at \emph{MoA}~\citep{wang2024moa}. After cost
adjustment (Eq.~\ref{eq:reward}) each type has one clearly separated optimal arm, giving a
learnable signal and a well-defined regret target.

\textbf{Router and baselines.} LinUCB with $d{=}7$, $\alpha{=}0.6$. We compare against the six
fixed-protocol baselines (always play one arm), an \emph{oracle} upper bound (always play the
cost-adjusted optimal arm, no noise), and a \emph{context-free} bandit ablation (LinUCB fed a
constant intercept-only feature, \ie no task context). Reward is cost-adjusted
(Eq.~\ref{eq:reward}); unless noted $\lambda{=}0.3$ and $T{=}400$. Routing accuracy is the
fraction of the last 50 tasks routed to the cost-adjusted optimal arm. All runs are
seeded and reproducible (supplement).

\textbf{Model versions.} The controlled study above uses no LLM calls. All real-model runs use
this registry and pin exact versions: DeepSeek-V4
(\code{deepseek-v4-pro}, \code{deepseek-v4-flash}), Qwen3.7 (\code{qwen3.7-plus}),
GLM-5.2 (\code{glm-5.2}), and MiniMax-M3
(\code{MiniMax-M3}); the AlpacaEval judge is \code{glm-5.2}.

\paragraph{Main result: the router approaches the oracle.}
Table~\ref{tab:main} reports mean cost-adjusted reward at $T{=}400$, $\lambda{=}0.3$. The router
attains $0.708$, beating \emph{all six} fixed-protocol baselines (best fixed, MoA, $0.669$)
and recovering $45\%$ of the best-fixed$\to$oracle headroom (oracle $0.754$); final routing
accuracy is $82\%$, with sublinear regret and an accuracy plateau (Figure~\ref{fig:curves}).

\begin{table}[t]
\centering
\small
\setlength{\tabcolsep}{3pt}
\begin{tabular}{lcc}
\toprule
Method & Mean reward & $\Delta$ vs.\ router \\
\midrule
\textbf{Adaptive Org Routing (LinUCB)} & \textbf{0.708} & --- \\
\midrule
Oracle (upper bound) & 0.754 & $+0.046$ \\
\midrule
MoA (best fixed) & 0.669 & $-0.038$ \\
Blender & 0.652 & $-0.056$ \\
Voting & 0.629 & $-0.079$ \\
Reflexion & 0.621 & $-0.086$ \\
Plan-Execute & 0.550 & $-0.157$ \\
Debate & 0.519 & $-0.189$ \\
\midrule
Context-free bandit (ablation) & 0.628 & $-0.080$ \\
\bottomrule
\end{tabular}
\caption{Mean cost-adjusted reward over $T{=}400$ mixed tasks ($\lambda{=}0.3$). The router
beats all six fixed baselines and approaches the oracle; the context-free ablation collapses
to always-voting, below the best fixed arm.}
\label{tab:main}
\end{table}

\begin{figure}[t]
\centering
\includegraphics[width=0.98\columnwidth]{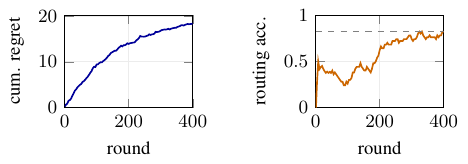}
\caption{Online learning ($\lambda{=}0.3$, $T{=}400$). \emph{Left:} cumulative regret vs.\ the
cost-adjusted oracle grows sublinearly and flattens as the policy converges (consistent with
the $\tilde{O}(\sqrt{T})$ LinUCB bound). \emph{Right:} last-50 routing accuracy climbs toward
its plateau (dashed: final $82\%$).}
\label{fig:curves}
\end{figure}

The learned arm counts, in turn, show what the router learned: cheap \emph{MoA} (142/400) and
\emph{voting} (111) carry the open-ended and factual mass, \emph{reflexion} (73) fires on code
tasks, and the expensive arms (\emph{plan-execute} 29, \emph{blender} 23, \emph{debate} 22)
are played only where their edge survives the cost penalty.

\paragraph{Ablation: task context.}
The context-free bandit (constant feature) collapses to mean reward $0.628$, the level of
always-voting ($0.629$) and \emph{below} the best fixed arm (MoA, $0.669$). Without features it
can only learn one global arm, so the gain comes from context, not from the bandit machinery.

\paragraph{Ablation: cost-aversion $\lambda$.}
The $\lambda$-sweep ($\lambda\in\{0.1,0.3,0.5\}$) shows the router beats best-fixed at every
setting: accuracy rises as cost pressure separates the cost-adjusted optima ($70\%$, $82\%$,
$84\%$) while absolute reward falls with the heavier penalty.

\subsection{From synthetic oracle to real rewards}
\label{sec:realrouting}
We now run the \emph{same} router on \emph{real} rewards: each round draws an item from a mixed
real-benchmark stream, the chosen protocol runs with real models, and the objective score
(exact-match / pass@1 / token-F1) forms $r=\text{score}-\lambda\,\tilde c(\pi)$. With no oracle
for real quality we report realized reward and score, not a regret curve. Over an 18-round mixed stream
($\lambda{=}0.3$, belbin) the router
trains end-to-end with no errors: mean task score $0.68$, reward $0.54$
(HumanEval $1.00$, GSM8K $0.83$, HotpotQA $0.21$). We read this as a feasibility result
($N{=}18$, with the arm distribution concentrated on one cost-efficient protocol): the central
mechanism now runs unchanged on genuine outcomes.

\paragraph{Judge-based rewards integrate on the same scale.}
Mixing AlpacaEval into the stream (reward an LLM-judge win-rate against a reference baseline), a
20-round four-dataset run trains with no errors (mean task score $0.86$,
reward $0.76$):
judge-scored items land in $[0,1]$ on the same scale as exact-match/F1, so the
bandit needs no special-casing. As a small-$N$ artifact the policy again concentrates on one
cheap arm (here voting): a subjective, judge-derived reward drops into the same online loop.

\section{Discussion and Limitations}
\label{sec:discussion}

\paragraph{The synthetic oracle is a controlled stand-in.}
The regret/accuracy study uses a \emph{simulated} oracle calibrated to literature
trends~\citep{du2023debate,shinn2023reflexion,hong2024metagpt}: learning under known ground
truth with exact regret; real-benchmark and real-reward results supply absolute quality.

\paragraph{Linearity, arms, and scale.}
LinUCB assumes payoffs approximately linear in the 7 features; kernelized or neural bandits
could capture nonlinear structure at higher cost. Extending arms to (protocol, coordination)
or per-role bindings is well-defined under Proposition~\ref{prop:ortho} but enlarges
exploration; our real runs are small-$N$ feasibility runs, not tuned policies.

\paragraph{Cost and statistical scope.}
The router uses model-call multiplicity rather than provider-specific billing, so deployment
costs require local recalibration. Real-model cells contain only $N{=}10$ items and the RACI
study only three repeats; we therefore report latency, dispersion, sign consistency, and
paired effects instead of leaderboard or significance claims. The process-global policy is
suited to sequential learning; concurrent services should isolate or synchronize updates.

\paragraph{Benchmark and deployment scope.}
The evaluated tasks score final deliverables; they do not cover long-horizon tool use, human
approval loops, or failures caused by changing external state. AgentsNet is posed as a
whole-graph team deliverable rather than through its native partial-visibility harness, which
preserves task-intrinsic scoring but narrows the coordination claim. Online runs also depend on
provider availability and a unified proxy, so their latency is environment-specific even when
quality metrics are reproducible. Finally, the current feature map is intentionally lightweight
and English-centric. Richer semantic features, persistent policies across deployments, and
joint routing over organization and per-role model bindings remain open evaluations rather than
claims established here.

\paragraph{Scope of orthogonality.}
Proposition~\ref{prop:ortho} guarantees \emph{interfaces} compose, not that every combination
is equally \emph{effective}; the router exists to discover effective combinations empirically.

\paragraph{Model-binding dependence.}
Recipe winners and the direction of the RACI effect flip across model bindings (supplement),
so organizational conclusions need not transfer across model families. Re-validation must
therefore be cheap, or learned online; this is the framework's central argument.

\section{Conclusion}

Most multi-agent LLM frameworks hard-wire \emph{who} collaborates into \emph{how} results are
fused. \imacs{} separates these concerns: organizational theory becomes executable
configuration, controlled ablations expose mechanisms, and protocol choice becomes learnable.
Adaptive Org Routing outperforms every fixed protocol in the controlled study and trains online
on real rewards. Roles, coordination, and accountability can thus be optimized alongside fusion
algorithms, enabling inspectable comparisons across organizations, protocols, and model bindings.

{\small
\bibliography{references}
}

\end{document}